\newtheorem{conjecture}[theorem]{Conjecture}
\DeclareMathOperator*{\argmax}{\arg\!\max}
\DeclareMathOperator*{\argmin}{\arg\!\min}
\newcommand{\pol}{{\pi}}
\newcommand{\Pol}{{\Pi}}
\newcommand{\pH}{{\dot{\pol}}}
\newcommand{\sstart}{{\hat{s}}}
\newcommand{\St}{{\mathcal{S}}}
\newcommand{\Ac}{{\mathcal{A}}}
\newcommand{\Rw}{{\mathcal{R}}}
\newcommand{\Pl}{{\mathcal{P}}}
\newcommand{\RH}{{\dot{R}}}
\newcommand{\plH}{{\dot{p}}}
\DeclareMathOperator{\Reg}{Reg}
\title{Occam's razor is insufficient to infer the preferences of irrational agents}
\author{
Stuart Armstrong \thanks{Equal contribution.}{*} \thanks{Further affiliation: Machine Intelligence Research Institute, Berkeley, USA.} \\
Future of Humanity Institute \\
University of Oxford \\
\texttt{stuart.armstrong@philosophy.ox.ac.uk} \\
\And
S{\"o}ren Mindermann* \thanks{Work performed at Future of Humanity Institute.} \\
Vector Institute \\
University of Toronto \\
\texttt{soeren.mindermann@gmail.com} \\
}
\begin{document}
\maketitle

\begin{abstract}
Inverse reinforcement learning (IRL) attempts to infer human rewards or preferences from observed behavior. Since human planning systematically deviates from rationality, several approaches have been tried to account for specific human shortcomings. 
However, the general problem of inferring the reward function of an agent of unknown rationality has received little attention.
Unlike the well-known ambiguity problems in IRL, this one is practically relevant but cannot be resolved by observing the agent's policy in enough environments.
This paper shows (1) that a No Free Lunch result implies it is impossible to uniquely decompose a policy into a planning algorithm and reward function, and (2) that even with a reasonable simplicity prior/Occam's razor on the set of decompositions, we cannot distinguish between the true decomposition and others that lead to high regret.
To address this, we need simple `normative' assumptions, which cannot be deduced exclusively from observations.
\end{abstract}

\renewcommand{\tableautorefname}{Code} 
\renewcommand{\tablename}{Code} 


\section{Introduction}
In today's reinforcement learning systems, a simple reward function is often hand-crafted, and still sometimes leads to undesired behaviors on the part of RL agent, as the reward function is not well aligned with the operator's true goals\footnote{See for example the game CoastRunners, where an RL agent didn't finish the course, but instead found a bug allowing it to get a high score by crashing round in circles \url{https://blog.openai.com/faulty-reward-functions/}.}. As AI systems become more powerful and autonomous, these failures will become more frequent and grave as RL agents exceed human performance, operate at time-scales that forbid constant oversight, and are given increasingly complex tasks --- from driving cars to planning cities to eventually evaluating policies or helping run companies. Ensuring that the agents behave in alignment with human values is known, appropriately, as the \textit{value alignment problem} \citep{Amodei,Hadfield-Menell,Russell2015,superI,leike2017ai}.

One way of resolving this problem is to infer the correct reward function by observing human behaviour.
This is known as Inverse reinforcement learning (IRL) \citep{Ng2000,Abbeel2004,ziebart}. Often, learning a reward function is preferred over imitating a policy: when the agent must outperform humans, transfer to new environments, or be interpretable. The reward function is also usually a (much) more succinct and robust task representation than the policy, especially in planning tasks \citep{Abbeel2004}. Moreover, supervised learning of long-range and goal-directed behavior is often difficult without the reward function \citep{Ratliff2006}.

Usually, the reward function is inferred based on the assumption that human behavior is optimal or noisily optimal. However, it is well-known that humans deviate from rationality in \textit{systematic}, non-random ways \citep{Tversky1975}. This can be due to specific biases such as time-inconsistency, loss aversion and hundreds of others, but also limited cognitive capacity, which leads to forgetfulness, limited planning and false beliefs.
This limits the use of IRL methods for tasks that humans don't find trivial.

Some IRL approaches address specific biases \citep{Evans, Evans2016}, and others assume noisy rationality \citep{ziebart, Boularias2011}. 
But a general framework for inferring the reward function from suboptimal behavior does not exist to our knowledge. 
Such a framework needs to infer two unobserved variables simultaneously: the human reward function and their planning algorithm\footnote{
Technically we only need to infer the human reward function, but inferring that from behaviour requires some knowledge of the planning algorithm.
} which connects the reward function with behaviour, henceforth called a \emph{planner}.

The task of observing human behaviour (or the human policy) and inferring from it the human reward function and planner will be termed \emph{decomposing} the human policy.
This paper will show there is a No Free Lunch theorem in this area: it is impossible to get a unique decomposition of human policy and hence get a unique human reward function.
Indeed, \emph{any} reward function is possible.
And hence, if an IRL agent acts on what it believes is the human policy, the potential regret is near-maximal.
This is another form of unidentifiability of the reward function, beyond the well-known ones \citep{Ng2000,Amin2016}.

The main result of this paper is that, unlike other No Free Lunch theorems, this unidentifiability does not disappear when regularising with a general simplicity prior that formalizes Occam's razor \citep{vitanyi1997introduction}.
This result will be shown in two steps: first, that the simplest decompositions include degenerate ones, and secondly, that the most `reasonable' decompositions according to human judgement are of high complexity.

So, although current IRL methods can perform well on many well-specified problems, they are fundamentally and philosophically incapable of establishing a `reasonable' reward function for the human, no matter how powerful they become.
In order to do this, they will need to build in `normative assumptions':  key assumptions about the reward function and/or planner, that cannot be deduced from observations, and allow the algorithm to focus on good ways of decomposing the human policy.

Future work will sketch out some potential normative assumptions that can be used in this area, making use of the fact that humans assess each other to be irrational, and often these assessments agree.
In view of the No Free Lunch result, this shows that humans must share normative assumptions.

One of these `normative assumption' approaches is briefly illustrated in an appendix, while another appendix demonstrates how to use the planner-reward formalism to define when an agent might be manipulating or overriding human preferences.
This happens when the agent pushes the human towards situations where their policy is very suboptimal according to their reward function.

\section{Related Work}

In the first IRL papers from \citet{Ng2000} and \citet{Abbeel2004} a max-margin algorithm was used to find the reward function under which the observed policy most outperforms other policies. Suboptimal behavior was first addressed explicitly by \citet{Ratliff2006} who added slack variables to allow for suboptimal behavior. This finds reward functions such that the observed policy outperforms most other policies and the biggest margin by which another policy outperforms it is minimal, i.e. the observed policy has low regret. \citet{Shiarlis} introduce a modern max-margin technique with an approximate planner in the optimisation.

However, the max-margin approach has mostly been replaced by the max entropy IRL \citep{ziebart}. Here, the assumption is that observed actions or trajectories are chosen with probability proportional to the exponent of their value. This assumes a specific suboptimal planning algorithm which is \textit{noisily} rational (also known as \textit{Boltzmann}-rational). Noisy rationality explains human behavior on various data sets better \citep{Hula2015}. However, \citet{Evans} and \citet{Evans2016} showed that this can fail since humans deviate from rationality in systematic, non-random ways. If noisy rationality is assumed, repeated suboptimal actions throw off the inference.


Literature on inferring the reasoning capabilities of an agent is scarce. \citet{Evans} and \citet{Evans2016} use Bayesian inference to identify specific planning biases such as myopic planning and hyperbolic time-discounting. They simultaneously infer the agent's preferences. \citet{cundy2018exploring} adds bias resulting from hierarchical planning.
\citet{Hula2015} similarly let agents infer features of their opponent's reasoning such as planning depth and impulsivity in simple economic games. Recent work learns the planning algorithm with two assumptions: being close to noisily rational in a high-dimensional planner space and supervised planner-learning \citep{anonymous2019inferring}.

The related ideas of meta-reasoning \citep{Russell2016}, computational rationality \citep{Lewis2014} and resource rationality \citep{Griffiths2015} may create the possibility to redefine irrational behavior as rational in an `ancestral' distribution of environments where the agent optimises its rewards by choosing among the limited computations it is able to perform or jointly minimising the cost of computation and maximising reward. This could in theory redefine many biases as computationally optimal in some distribution of environments and provide priors on human planning algorithms. Unfortunately the problem of doing this in practice seems to be extremely difficult --- and it assumes that human goals are roughly the same as evolution's goals, which is certainly not the case.

\section{Problem setup and background}
A human will be performing a series of actions, and from these, an agent will attempt to estimate both the human's reward function and their planning algorithm.

The environment $M$ in which the human operates is an MDP/R, a Markov Decision Process without reward function (a \emph{world-model} \citep{hadfield2017inverse}).
An MDP/R is defined as a tuple, $\langle \St,\Ac,T,\sstart \rangle$ consisting of a discrete state space $\St$, a finite action space $\Ac$, a fixed starting state $\sstart$, and a probabilistic transition function $T:\St\times \Ac \times \St \to [0,1]$ to the next state (also called the \textit{dynamics}).
At each step, the human is in a certain state $s$, takes a certain action $a$, and ends up in a new state $s'$ as given by $T(s'\mid s,a)$.

Let $\Rw = \{R: \St\times\Ac \to [-1,1] \} = [-1,1]^{\St \times \Ac}$ be the space of candidate reward functions; a given $R$ will map any state-reward pair to a reward value in the interval $[-1,1]$.

Let $\Pol$ be the space of deterministic, Markovian
policies.
So $\Pol$ is the space of functions $\St\to\Ac$.
The human will be following the policy $\pH\in\Pol$.

The results of this paper apply to both discounted rewards and episodic environments settings\footnote{The setting is only chosen for notational convenience: it also emulates discrete POMDPs, non-Markovianness (eg by encoding the whole history in the state) and pseudo-random policies. 
}.

\subsection{Planners and reward functions: decomposing the policy}

The human has their reward function, and then follows a policy that presumably attempts to maximise it.
Therefore there is something that bridges between the reward function and the policy: a piece of greater or lesser rationality that transforms knowledge of the reward function into a plan of action.

This bridge will be modeled as a \emph{planner} $p: \Rw \to \Pol$, a function that takes a reward and outputs a policy.
This planner encodes all the rationality, irrationality, and biases of the human.
Let $\Pl$ be the set of planners.
The human is therefore defined by a \emph{planner-reward} pair $(p, R)\in\Pl\times\Rw$. Similarly, $(p,R)$ with $p(R)=\pol$ is a \emph{decomposition} of the policy $\pol$.
The task of the agent is to find a `good' decomposition of the human policy $\pH$.

\subsection{Compatible pairs and evidence}

The agent can observe the human's behaviour and infer their policy from that.
In order to simplify the problem and separate out the effect of the agent's learning, we will assume the agent has perfect knowledge of the human policy $\pH$ and of the environment $M$.
At this point, the agent cannot learn anything by observing the human's actions, as it can already perfectly predict these.

Then a pair $(p, R)$ is defined to be \emph{compatible} with $\pH$, if $p(R)=\pH$ --- thus that pair is a possible candidate for decomposing the human policy into the human's planner and reward function.




\section{Irrationality-based unidentifiability}

Unidentifiability of the reward is a well-known problem in IRL \citep{Ng2000}. \citet{Amin2016} categorise the problem into \textit{representational} and \textit{experimental} unidentifiability. The former means that adding a constant to a reward function or multiplying it with a positive scalar does not change what is optimal behavior.
This is unproblematic as rescaling the reward function doesn't change the preference ordering.
The latter can be resolved by observing 
optimal policies in a whole class of MDPs which contains all possible transition dynamics.
We complete this framework with a third kind of identifiability, which arises when we observe suboptimal agents.
This kind of unidentifiability is worse as it cannot necessarily be resolved by observing the agent in many tasks. In fact, it can lead to almost arbitrary regret.

\subsection{Weak No Free Lunch: unidentifiable reward function and half-maximal regret}
The results in this section show that without assumptions about the rationality of the human, all attempts to optimise their reward function are essentially futile. \citet{Everitt2017} work in a similar setting as we do: in their case, a corrupted version of the reward function is observed.
The problem our case is that a `corrupted' version $\pH$ of an optimal policy $\pol^*_{\dot{R}}$ is observed and used as information to optimise for the ideal reward $\dot{R}$. A No Free Lunch result analogous to theirs applies in our case; both resemble the No Free Lunch theorems for optimisation \citep{Wolpert1997}.

More philosophically, this result is as an instance of the well-known \textit{is-ought} problem from meta-ethics. \citet{hume1888treatise} argued that what \emph{ought} to be (here, the human's reward function) can never be concluded from what \emph{is} (here, behavior) without extra assumptions. Equivalently, the human reward function cannot be inferred from behavior without assumptions about the planning algorithm $p$. In probabilistic terms, the likelihood 
$P(\pol|R)=\sum_{p\in\Pl}P(\pol\mid R,p)P(p)$ is undefined without $P(p)$.
As shown in \autoref{no:simplicity} and \autoref{comp:human:reward}, even a simplicity prior on $p$ and $R$ will not help.

\subsubsection{Unidentifiable reward functions}
Firstly, we note that compatibility ($p(R)=\pH$), puts no restriction on $R$, and few restrictions on $p$:
\begin{theorem}\label{theo:weak:nfl}
For all $\pol\in\Pol$ and $R\in\Rw$, there exists a $p\in\Pl$ such that $p(R)=\pol$.

For all $p\in\Pl$ and $\pol\in\Pol$ in the image of $p$, there exists an $R$ such that $p(R)=\pol$.
\end{theorem}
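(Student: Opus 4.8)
The plan is to exploit the fact that the planner space $\Pl$ is completely unconstrained: a planner is defined only as an arbitrary function $\Rw \to \Pol$, with no demand of rationality, monotonicity, continuity, or even measurability. Both statements then reduce to elementary observations about arbitrary functions between sets, and I would treat each clause separately.

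For the first statement, I would fix an arbitrary policy $\pol \in \Pol$ and reward $R \in \Rw$ and introduce the \emph{constant planner} $p_\pol \colon \Rw \to \Pol$ defined by $p_\pol(R') = \pol$ for every $R' \in \Rw$. This is a legitimate element of $\Pl$ precisely because $\Pl$ imposes no conditions on how a planner must respond to its reward argument. Evaluating at the chosen reward gives $p_\pol(R) = \pol$, which is exactly the required compatibility. I would note in passing that this single planner realises $\pol$ regardless of which reward it is paired with, so it in fact demonstrates the stronger point that compatibility places no restriction whatsoever on $R$.

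For the second statement, I would fix a planner $p \in \Pl$ and a policy $\pol$ lying in the image of $p$, and the key step is simply to unfold the definition of ``image.'' Saying $\pol \in \operatorname{im}(p) = \{\, p(R') : R' \in \Rw \,\}$ means, by the definition of the image of a function, that there exists some $R' \in \Rw$ with $p(R') = \pol$; taking $R = R'$ discharges the existential. Thus this direction is immediate once one recognises it as nothing more than a restatement of membership in the image.

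The main obstacle --- such as it is --- is conceptual rather than technical: one must notice that $\Pl$ carries no structure at all, so that degenerate objects like the constant planner are admissible, and one must resist the temptation to read hidden rationality constraints into the word ``planner.'' There is no calculation to carry out; the entire content of the theorem lies in how permissive the planner--reward formalism is, which is exactly the permissiveness the paper will leverage in the subsequent simplicity-prior arguments (\autoref{no:simplicity} and \autoref{comp:human:reward}).
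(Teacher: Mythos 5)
Your proof is correct and takes essentially the same route as the paper: the paper likewise proves the first claim by exhibiting the constant (`indifferent') planner $p_\pol$ that sends every reward function to $\pol$, and proves the second by simply unfolding what it means for $\pol$ to lie in the image of $p$. Nothing further is needed.
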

\begin{proof}
Trivial proof: define the planner\footnote{This is the `indifferent' planner $p_\pol$ of \autoref{degen:pair}.} $p$ as mapping all of $\Rw$ to $\pol$; then $p(R)=\pol$.
The second statement is even more trivial, as $\pol$ is in the image of $p$, so there must exist $R$ with $p(R)=\pol$.
\end{proof}

\subsubsection{Half-maximal regret}

The above shows that the reward function cannot be constrained by observation of the human, but what about the expected long-term value?
Suppose that an agent is unsure what the actual human reward function is; if the agent itself is acting in an MDP/R, can it follow a policy that minimises the possible downside of its ignorance?

This is prevented by a recent No Free Lunch theorem.
Being ignorant of the reward function one should maximise is equivalent of having a \emph{corrupted reward channel} with arbitrary corruption.
In that case, \citet{Everitt2017} demonstrated that whatever policy $\pol$ the agent follows, there is a $R\in\Rw$ for which $\pol$ is half as bad as the worst policy the agent could have followed. Specifically, let $V^\pol_R(s)$ be the expected return of reward function $R$ from state $s$, given that the agent follows policy $\pol$.
If $\pol$ was the optimal policy for $R$, then this can be written as $V^*_R(s)$.
The regret of $\pol$ for $R$ at $s$ is given by the difference:
\begin{align*}
    \Reg(\pol,R)(s) = V^*_R(s)-V^\pol_R(s).
\end{align*}
Then \citet{Everitt2017} demonstrates that for any $\pol$,
\begin{align*}
    \max_{R\in\Rw} \Reg(\pol,R)(s) \geq \frac{1}{2} \left(\max_{\pol' \in \Pol,R \in \Rw} \Reg(\pol',R)(s)\right).
\end{align*}
So for any compatible $(p,R)=\pH$, we cannot rule out that maximizing $R$ leads to at least half of the worst-case regret.

\section{Simplicity of degenerate decompositions}\label{no:simplicity}

Like many No Free Lunch theorems, the result of the previous section is not surprising given there are no assumptions about the planning algorithm.
No Free Lunch results are generally avoided by placing a simplicity prior on the algorithm, dataset, function class or other object \citep{Everitt}.
This amounts to saying algorithms can benefit from regularisation.
This section is dedicated to showing that, surprisingly, simplicity does not solve the No Free Lunch result.

Our simplicity measure is minimum description length of an object, defined as Kolmogorov complexity \citep{kolmogorov1965three}, the length of the shortest program that outputs a string describing the object. This is the most general formalization of Occam's razor we know of \citep{vitanyi1997introduction}.
\autoref{other:alg:com} explores how the results extend to other measures of complexity, such as those that include computation time. We start with informal versions of our main results.

\begin{theorem}[Informal simplicity theorem]\label{inf:sim}
Let $(\plH,\RH)$ be a `reasonable' planner-reward pair that captures our judgements about the biases and rationality of a human with policy $\pH = \plH(\RH)$. Then there are degenerate planner-reward pairs, compatible with $\pH$, of lower complexity than $(\plH,\RH)$, and a pair $(\plH',-\RH)$ of similar complexity to $(\plH,\RH)$, but with opposite reward function.
\end{theorem}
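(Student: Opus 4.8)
The plan is to exhibit both families of decompositions explicitly and then bound their Kolmogorov complexity $K$ against $K(\plH,\RH)$, the complexity of the reasonable pair. Everything reduces to two short constructions plus some bookkeeping with the additive constants inherent in $K$.

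For the degenerate direction I would use the indifferent planner $p_{\pH}$ from \autoref{theo:weak:nfl}, which sends every reward function to $\pH$, and pair it with the trivial reward $R\equiv 0$. The pair $(p_{\pH},0)$ is compatible with $\pH$ by construction, since $p_{\pH}(0)=\pH$. Its description length is bounded by $K(\pH)+O(1)$: all one must specify is $\pH$ together with the fixed ``ignore the input and output $\pH$'' rule, and the zero reward costs only $O(1)$. Because $\pH=\plH(\RH)$ is obtained from the reasonable pair by a single application of the (computable) planner $\plH$, we have $K(\pH)\le K(\plH,\RH)+O(1)$, hence $K(p_{\pH},0)\le K(\plH,\RH)+O(1)$. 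Under the informal premise that a genuinely reasonable pair must encode substantially more than the bare policy --- that is, $K(\plH,\RH)$ exceeds $K(\pH)$ by more than these additive constants --- the degenerate pair is strictly simpler.

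For the opposite-reward direction I would define $\plH'$ by precomposing $\plH$ with reward negation, i.e. $\plH'(R)=\plH(-R)$ for all $R\in\Rw$. Then $\plH'(-\RH)=\plH(\RH)=\pH$, so $(\plH',-\RH)$ is a decomposition of $\pH$ whose reward is the exact negation $-\RH$. Since pointwise negation is an $O(1)$ preprocessing step and an involution, $K(\plH')\le K(\plH)+O(1)$ and $K(-\RH)\le K(\RH)+O(1)$, with the symmetric bounds holding in the reverse direction; therefore the two pairs have complexity equal up to an additive constant --- ``similar'' complexity --- while assigning opposite value to every state--action pair.

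The main obstacle is not either construction but making ``lower'' and ``similar'' precise against the $O(1)$ slack that is unavoidable in Kolmogorov complexity, and justifying that the reasonable pair is not itself near-degenerate (otherwise the strict inequality in the first part collapses). This is exactly why the statement is phrased informally here: the quantitative separation --- that simple compatible pairs are forced to be degenerate while every ``reasonable'' pair is forced to be complex --- is what the formal development in \autoref{comp:human:reward} must supply.
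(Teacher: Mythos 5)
Your proposal is correct and follows essentially the same route as the paper: your $(p_{\pH},0)$ is the paper's indifferent-planner degenerate pair, your $\plH'(R)=\plH(-R)$ is exactly the paper's negated planner $-\plH$ (\autoref{negation:definition}, the operation $F_4$), and your $O(1)$ bookkeeping is what the paper packages as a ``$c$-reasonable language'' of bounded $F$-complexity. You also correctly isolate the one unprovable ingredient --- that a ``reasonable'' pair must cost strictly more than the bare policy $\pH$ --- which the paper likewise leaves as a qualitative argument and conjecture (Conjecture~\ref{reasonable:complex} in \autoref{no:complex_human}, rather than \autoref{comp:human:reward} as you point to).
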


There are a few issues with this theorem as it stands.
Firstly, simplicity in algorithmic information theory is relative to the computer language (or equivalently Universal Turing Machine) $L$ used \citep{ming2014kolmogorov, Calude2002}, and there exists languages in which the theorem is clearly false: one could choose a degenerate language in which $(\plH,\RH)$ is encoded by the string `$0$', for example, and all other planner-reward pairs are of extremely long length.
What constitutes a `reasonable' language is a long-standing open problem, see \citet{leike2017ai} and \citet{muller2010stationary}.
For any pair of languages, complexities differ only by a constant, the amount required for one language to describe the other, but this constant can be arbitrarily large.

Nevertheless, this section will provide grounds for the following two semi-formal results:
\begin{proposition}\label{degen:prop}
If $\pH$ is a human policy, and $L$ is a `reasonable' computer language, then there exists degenerate planner-reward pairs amongst the pairs of lowest complexity compatible with $\pH$.
\end{proposition}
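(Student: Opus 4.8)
The plan is to exhibit an explicit degenerate compatible pair whose complexity is within an additive constant of the least complexity any compatible pair can have, so that it must sit among the simplest decompositions of $\pH$. Write $K_L(x)$ for the Kolmogorov complexity of $x$ in the reference language $L$, i.e.\ the length of the shortest $L$-program that prints a description of $x$. The degenerate pair I would use is $(p_{\pH}, R_0)$, where $p_{\pH}$ is the \emph{indifferent} planner of \autoref{degen:pair} specialised to $\pH$ --- the planner that ignores its reward argument and returns $\pH$ on every input, exactly as constructed in \autoref{theo:weak:nfl} --- and $R_0$ is any fixed reward of constant complexity (for instance $R_0 \equiv 0$). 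Since $p_{\pH}(R_0) = \pH$, this pair is compatible with $\pH$ by construction, and it is degenerate in the sense of \autoref{theo:weak:nfl}.

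First I would establish a lower bound valid for \emph{every} compatible pair. If $(p, R)$ satisfies $p(R) = \pH$, then a single fixed $L$-program --- the one that reads a description of $(p, R)$ and evaluates $p$ on $R$ --- recovers $\pH$. Hence $K_L(\pH) \le K_L(p, R) + O(1)$, so every compatible pair obeys $K_L(p, R) \ge K_L(\pH) - O(1)$. In particular, no decomposition of $\pH$, degenerate or otherwise, can have complexity meaningfully below $K_L(\pH)$.

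Next I would produce a matching upper bound for the degenerate pair. The indifferent planner $p_{\pH}$ is fully determined by $\pH$ together with a constant-size wrapper that says ``discard the reward and output this stored policy''; describing $R_0$ costs $O(1)$; and combining the two into a pair costs $O(1)$. Thus $K_L(p_{\pH}, R_0) \le K_L(\pH) + O(1)$. Combining the two bounds, the degenerate pair attains complexity $K_L(\pH) \pm O(1)$, so it lies within an additive constant of the infimal complexity over all compatible pairs and therefore belongs to the lowest-complexity decompositions of $\pH$.

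The crux --- and the reason the statement is only \emph{semi-formal} --- is controlling these $O(1)$ constants, which is precisely where the hypothesis that $L$ be ``reasonable'' does its work. In a pathological $L$ the constant-function wrapper, or $R_0$ itself, could be charged an enormous cost, breaking the upper bound (this is the degenerate-language objection raised just before the proposition). I would therefore argue that any reasonable $L$ prices elementary operations --- ``store a value and return it regardless of input'' and ``emit the zero function'' --- at a genuinely small number of bits, small relative to $K_L(\pH)$ for any non-trivial human policy. Under that reading the additive constants are negligible and the conclusion follows; making them rigorously negligible in a language-independent way is exactly the long-standing open problem flagged in the surrounding discussion, so I would not attempt to remove the ``reasonable language'' hypothesis.
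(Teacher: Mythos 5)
Your proposal is correct and follows essentially the same route as the paper: your two inequalities through $K_L(\pH)$ (every compatible pair can reconstruct $\pH$ at $O(1)$ cost, and the indifferent pair $(p_\pH,0)$ is buildable from $\pH$ at $O(1)$ cost) are exactly the informal argument the paper gives at the start of its subsection on simple degenerate pairs, which it then formalises by composing those steps into the operations $F_i$ and defining a $c$-reasonable language as one whose $F$-complexity is at most $c$. The only cosmetic differences are that the paper exhibits three degenerate pairs ($(p_\pH,0)$, $(p_g,R_\pH)$, $(-p_g,-R_\pH)$) rather than just yours, and that it absorbs your two $O(1)$ constants into the single reasonableness constant $c$ rather than passing explicitly through $K_L(\pH)$.
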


\begin{proposition}\label{minus:prop}
If $\pH$ is a human policy, and $L$ is a `reasonable' computer language with $(\plH,\RH)$ a compatible planner-reward pair, then there exist a pair $(\plH',-\RH)$ of comparable complexity to $(\plH,\RH)$, but opposite reward function.
\end{proposition}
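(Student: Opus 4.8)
The plan is to absorb the sign flip of the reward into the planner at only constant description-length cost, exploiting the fact that the reward space $\Rw=[-1,1]^{\St\times\Ac}$ is symmetric under negation, so that flipping the reward carries essentially no information about the policy.

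First I would define the candidate planner $\plH'\colon\Rw\to\Pol$ by $\plH'(R)=\plH(-R)$; this is well-defined as an element of $\Pl$ because $-R\in\Rw$ whenever $R\in\Rw$. Compatibility with $\pH$ is then a one-line check:
\begin{align*}
\plH'(-\RH)=\plH\big(-(-\RH)\big)=\plH(\RH)=\pH,
\end{align*}
so $(\plH',-\RH)$ is a decomposition of $\pH$ whose reward function is exactly $-\RH$. This already yields the existence of a compatible pair with the opposite reward; the remaining work is entirely about complexity.

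Next I would bound the description length relative to the language $L$. The transformation $(\plH,\RH)\mapsto(\plH',-\RH)$ is implemented by a single fixed program that negates the reward component and wraps the planner so it negates its argument before calling $\plH$. Since this program does not depend on the particular $\plH$ or $\RH$, it witnesses $K_L(\plH',-\RH)\le K_L(\plH,\RH)+O(1)$. Because negation is an involution, running the same program on $(\plH',-\RH)$ returns $(\plH,\RH)$, so the reverse bound $K_L(\plH,\RH)\le K_L(\plH',-\RH)+O(1)$ holds as well; the two pairs therefore differ in Kolmogorov complexity by at most an additive constant, which is the intended meaning of `comparable'.

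The hard part is not the construction but justifying that this $O(1)$ constant is genuinely negligible, and this is exactly where the `reasonable language' hypothesis does all the work. The constant is precisely the cost, in $L$, of expressing pointwise reward negation (a sign flip on a finite table) together with one functional-composition wrapper. In any language one would call reasonable these are primitive, constant-size operations, so the constant is tiny compared with $K_L(\plH,\RH)$ for a realistically complicated human planner; but as already noted for \autoref{inf:sim}, a pathological $L$ could assign negation an arbitrarily long encoding, which is why this statement can only be established semi-formally rather than as a theorem uniform over all $L$.
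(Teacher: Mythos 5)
Your proof is correct and follows essentially the same route as the paper: your $\plH'$ is exactly the paper's negated planner $-\plH$ (Definition~\ref{negation:definition}), your compatibility check $\plH'(-\RH)=\plH(\RH)=\pH$ matches the paper's use of the operation $f_4(p,R)=(-p,-R)$, and your constant-cost-wrapper-plus-involution argument is precisely what the paper packages into the $F$-complexity of a $c$-reasonable language (with $F_4=f_4$ included in $F$ exactly so that the involution yields two-sided comparability). The only difference is presentational: the paper absorbs your informal $O(1)$ discussion into the formal hypothesis that $L$ is $c$-reasonable for $F$.
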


The last part of \autoref{inf:sim}, the fact that any `reasonable' $(\plH,\RH)$ is expected to be of higher complexity, will be addressed in 
\autoref{no:complex_human}.
%

\subsection{Simple degenerate pairs}

The argument in this subsection will be that 1) the complexity of $\pH$ is close to a lower bound on any pair compatible with it and 2) degenerate decompositions are themselves close to this bound. The first statement follows because for any decomposition $(p,R)$ compatible with $\pH$, the map $(p,R) \mapsto p(R)=\pH$ will be a simple one, adding little complexity.
And if a compatible pair $(p',R')$ can be from $\pH$ with little extra complexity, then it too will have a complexity close to the minimal complexity of any other pair compatible with it. Therefore we will first produce three degenerate pairs that can be simply constructed from $\pH$.

\subsubsection{The degenerate pairs}\label{degen:pair}

We can define the trivial constant reward function $0$, and the greedy planner $p_g$.
The greedy planner $p_g$ acts by taking the action that maximises the immediate reward in the current state and the next action.
Thus\footnote{Recall that $p_g$ is a planner, $p_g(R)$ is a policy, so $p_g(R)$ can be applied to states, and $p_g(R)(s)$ is an action.} $p_g(R)(s)=\argmax_a R(s,a)$.
We can also define the anti-greedy planner $-p_g$, with $-p_g(R)(s)=\argmin_a R(s,a)$.
In general, it will be useful to define the negative of a planner:
\begin{definition}\label{negation:definition}
If $p:\Rw\to \Pol$ is a planner, the planner $-p$ is defined by $-p(R)=p(-R)$.
\end{definition}
For any given policy $\pol$, we can define the \emph{indifferent} planner $p_\pol$, which maps any reward function to $\pol$.
We can also define the reward function $R_\pol$, so that $R_\pol(s,a)=1$ if $\pol(s)=a$, and $R_\pol(s,a)=0$ otherwise.
The reward function $-R_\pol$ is defined to be the negative of $R_\pol$. Then:
\begin{lemma}\label{compatible:lemma}
The pairs $(p_\pol,0)$, $(p_g, R_\pol)$, and $(-p_g,-R_\pol)$ are all compatible with $\pol$.
\end{lemma}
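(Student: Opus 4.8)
The plan is to verify the defining condition of compatibility, namely $p(R)=\pol$, separately for each of the three pairs; each verification is a direct appeal to the relevant definition. For the first pair $(p_\pol,0)$ this is immediate: by construction the indifferent planner $p_\pol$ sends every reward function to $\pol$, so in particular $p_\pol(0)=\pol$, and no computation is needed.

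For the second pair $(p_g,R_\pol)$, I would fix an arbitrary state $s\in\St$ and evaluate $p_g(R_\pol)(s)=\argmax_a R_\pol(s,a)$. Since $R_\pol(s,a)=1$ exactly when $a=\pol(s)$ and $R_\pol(s,a)=0$ otherwise, the reward $R_\pol(s,\cdot)$ attains its strict maximum at the single action $\pol(s)$. Hence the argmax is well-defined and equals $\pol(s)$, giving $p_g(R_\pol)(s)=\pol(s)$. As $s$ was arbitrary, $p_g(R_\pol)=\pol$.

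For the third pair $(-p_g,-R_\pol)$, I would unfold \autoref{negation:definition}: $-p_g(-R_\pol)=p_g(-(-R_\pol))=p_g(R_\pol)$, which equals $\pol$ by the previous step. Thus all three pairs satisfy $p(R)=\pol$ and are compatible with $\pol$.

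The only place requiring the slightest care is the second pair, where one must confirm that $\argmax_a R_\pol(s,a)$ is genuinely single-valued so that $p_g(R_\pol)$ is a well-defined deterministic policy; this is guaranteed precisely because $\pol$ is a deterministic policy, so $\pol(s)$ is the unique action receiving reward $1$ while every other action receives $0$. Everything else reduces to substituting the stated definitions of $p_\pol$, $p_g$, $R_\pol$, and planner negation, so I expect no genuine obstacle.
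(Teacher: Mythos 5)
Your proof is correct and follows essentially the same route as the paper's: direct verification of $p(R)=\pol$ from the definitions for the first two pairs, then reduction of the third pair to the second via the planner-negation definition, $-p_g(-R_\pol)=p_g(R_\pol)=\pol$. Your extra remark that the $\argmax$ is single-valued (since $\pol(s)$ is the unique action with reward $1$) is a small welcome addition the paper leaves implicit, but it does not change the argument.
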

\begin{proof}
Since the image $p_\pol$ is $\pol$, $p_\pol(0)=\pol$.
Now, $R_\pol(s,a)>0$ iff $\pol(s)=a$, hence for all $s$:
\begin{align*}
p_g(R_\pol)(s)=\argmax_a R_\pol(s,a) = \pol(s),
\end{align*} so $p_g(R_\pol) = \pol$. Then $-p_g(-R_\pol)=p_g(-(-R_\pol))=p_g(R_\pol)=\pol$, by \autoref{negation:definition}.
\end{proof}

\subsubsection{Complexity of basic operations}\label{basic:op}

We will look the operations that build the degenerate planner-reward pairs from any compatible pair:
\begin{enumerate}
    \item For any planner $p$, $f_1(p)=(p,0)$ as a planner-reward pair.
    \item For any reward function $R$, $f_2(R)=(p_g,R)$.
    \item For any planner-reward pair $(p,R)$, $f_3(p,R)=p(R)$.
    \item For any planner-reward pair $(p,R)$, $f_4(p,R)=(-p,-R)$.
    \item For any policy $\pol$, $f_5(\pol)=p_\pol$.
    \item For any policy $\pol$, $f_6(\pol)=R_\pol$.
\end{enumerate}
These will be called the basic operations, and there are strong arguments that reasonable computer languages should be able to express them with short programs. The operation $f_1$, for instance, is simply appending the flat trivial $0$, $f_2$ appends a planner defined by the simple\footnote{
In most standard computer languages, $\argmax$ just requires a $\mathrm{for}$-loop, a reference to $R$, a comparison with a previously stored value, and possibly the storage of a new value and the current action.
} search operator $\argmax$, $f_3$ applies a planner to the object --- a reward function --- that the planner naturally acts on, $f_4$ is a double negation, while $f_5$ and $f_6$ are simply described in \autoref{degen:pair}.

From these basic operations, we can define three composite operations that map any compatible planner-reward pair to one of the degenerate pairs (the element $F_4=f_4$ is useful for later definitions). Thus define
\begin{align*}
    F = \{F_1 = f_1\circ f_5 \circ f_3, \ \ F_2 = f_2 \circ f_6 \circ f_3,\ \ F_3 = f_4 \circ f_2 \circ f_6 \circ f_3, \ \ F_4 = f_4 \}.
\end{align*}
For any $\pH$-compatible pair $(p,R)$ we have $F_1(p,R)=(p_\pH,0)$, $F_2(p,R)=(p_g,R_\pH)$, and $F_3(p,R)=(-p_g,-R_\pH)$ (see the proof of \autoref{degen:prop:formal}).

Let $K_L$ denote Kolmogorov complexity in the language L: the shortest algorithm in $L$ that generates a particular object. We define the $F$-complexity of $L$ as
\begin{align*}
\max_{(p,R),F_i\in F} K_L(F_i(p,R)) - K_L(p,R).
\end{align*}
Thus the $F$-complexity of $L$ is how much the $F_i$ potentially increase\footnote{$F$-complexity is non-negative: $F_4 \circ F_4$ is the identity, so that $K_L (F_4(p,R)) - K_L(p,R) = - (K_L(F_4(F_4(p,R))-K_L(F_4(p,R))$, meaning that $\max_{(p,R),F_4} K_L (F_4(p,R)) - K_F(p,R)$ must be non-negative; this is a reason to include $F_4$ in the definition of $F$.}
the complexity of pairs.

For a constant $c \geq 0$, this allows us to formalise what we mean by $L$ being a $c$-reasonable language for $F$: that the $F$-complexity of $L$ is at most $c$.
A reasonable language is a $c$-reasonable language for a $c$ that we feel is intuitively low enough.

\subsubsection{Low complexity of degenerate planner-reward pairs}
To formalise the concepts `of lowest complexity', and `of comparable complexity', choose a constant $c \geq 0$, then $(p,R)$ and $(p',R')$ are of `comparable complexity' if
\begin{align*}
||K_L(p,R) - K_L(p',R')|| \leq c.
\end{align*}
For a set $S\subset \Pl\times\Rw$, the pair $(p,R)\in S$ is amongst the lowest complexity in $S$ if
\begin{align*}
||K_L(p,R) - \min_{(p',R')\in S} K_L(p',R')|| \leq c,
\end{align*}
thus $K_L$ is within distance $c$ of the minimum complexity element of $S$.
Now formalize \autoref{degen:prop}:
\begin{proposition}\label{degen:prop:formal}
If $\pH$ is the human policy, $c$ defines a reasonable measure of comparable complexity, and $L$ is a $c$-reasonable language for $F$, then the degenerate planner-reward pairs $(p_\pH,0)$, $(p_g, R_\pH)$, and $(-p_g,-R_\pH)$ are amongst the pairs of lowest complexity among the pairs compatible with $\pH$.
\end{proposition}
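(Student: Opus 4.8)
The plan is to exploit the fact that the composite operations $F_1,F_2,F_3$ send \emph{any} compatible pair onto the three degenerate pairs, and in particular send a \emph{minimal-complexity} compatible pair there while raising complexity by at most $c$. Let $S = \{(p,R)\in\Pl\times\Rw : p(R)=\pH\}$ be the set of pairs compatible with $\pH$. By \autoref{compatible:lemma} the three degenerate pairs lie in $S$, so $S$ is non-empty; since $K_L$ takes values in the non-negative integers, the minimum $m = \min_{(p',R')\in S} K_L(p',R')$ is attained at some pair $(p^*,R^*)\in S$.

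Next I would verify the composition identities $F_1(p^*,R^*)=(p_\pH,0)$, $F_2(p^*,R^*)=(p_g,R_\pH)$ and $F_3(p^*,R^*)=(-p_g,-R_\pH)$. Because $(p^*,R^*)$ is compatible, the innermost operation gives $f_3(p^*,R^*)=p^*(R^*)=\pH$; applying $f_5$ then $f_1$ yields $(p_\pH,0)$, applying $f_6$ then $f_2$ yields $(p_g,R_\pH)$, and composing the latter further with $f_4$ yields $(-p_g,-R_\pH)$. These are exactly the composition calculations already asserted after the definition of $F$, so this step is a routine unwinding of definitions.

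The core of the argument is then a two-sided squeeze. On the one hand, since $L$ is $c$-reasonable for $F$, the $F$-complexity bound applied to $(p^*,R^*)$ gives, for each $i\in\{1,2,3\}$,
\begin{align*}
K_L(F_i(p^*,R^*)) \leq K_L(p^*,R^*) + c = m + c.
\end{align*}
On the other hand, each $F_i(p^*,R^*)$ is one of the degenerate pairs, which by \autoref{compatible:lemma} itself lies in $S$, so $K_L(F_i(p^*,R^*)) \geq m$ by minimality of $m$. Combining the two inequalities, every degenerate pair $(p,R)$ satisfies $m \leq K_L(p,R) \leq m+c$, hence $||K_L(p,R) - m|| \leq c$, which is precisely the stated condition for being amongst the pairs of lowest complexity in $S$.

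The only real subtlety, and the step I would treat most carefully, is the choice of the \emph{minimal} compatible pair as the object to which the $F_i$ are applied: the $F$-complexity bound controls the increase $K_L(F_i(\cdot)) - K_L(\cdot)$ uniformly over \emph{all} pairs, but it is only by starting from $(p^*,R^*)$ that this bounded increase can be compared against the minimum $m$ taken over $S$. Everything else reduces to the composition check of the previous paragraph together with the observation, supplied by \autoref{compatible:lemma}, that the images land back inside $S$ and so cannot dip below $m$.
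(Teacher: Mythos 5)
Your proposal is correct and follows essentially the same route as the paper's proof: apply the composite operations $F_1,F_2,F_3$ to a minimal-complexity compatible pair, unwind the definitions via $f_3(p,R)=\pH$ to land on the degenerate pairs, and invoke $c$-reasonableness to bound the complexity increase by $c$. Your additional observations (that the minimum is attained and that membership in the compatible set supplies the trivial lower bound for the two-sided squeeze) are minor points of rigor the paper leaves implicit, not a different argument.
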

\begin{proof}
By \autoref{compatible:lemma}, $(p_\pH,0)$, $(p_g, R_\pH)$, and $(-p_g,-R_\pH)$ are compatible with $\pH$. By the definitions of the $f_i$ and $F_i$, for$(p,R)$ compatible with $\pH$, $f_3((p, R))=p(R)=\pH$ and hence
\begin{align*}
F_1(p,R) &= f_1 \circ f_5 (\pH) = f_1 (p_\pH) = (p_\pH,0),\\
F_2(p,R) &= f_2 \circ f_6 (\pH) = f_2 (R_\pH) = (p_g,R_\pH),\\
F_3(p,R) &= f_4 \circ F_2(p,R) = (-p_g,-R_\pH).
\end{align*}
Now pick $(p,R)$ to be the simplest pair compatible with $\pH$.
Since $L$ is $c$-reasonable for $F$, $K_L(p_\pH,0) \leq c + K_L(p,R)$.
Hence $(p_\pH,0)$ is of lowest complexity among the pairs compatible with $\pH$; the same argument applies for the other two degenerate pairs.
\end{proof}

\subsection{Negative reward}\label{comp:human:reward}

If $(\plH,\RH)$ is compatible with $\pH$, then so is $(-\plH,-\RH)=f_4(\plH,\RH)=F_4(\plH,\RH)$.
This immediately implies the formalisation of \autoref{minus:prop}:
\begin{proposition}
If $\pH$ is a human policy, $c$ defines a reasonable measure of comparable complexity, $L$ is a $c$-reasonable language for $F$, and $(\plH,\RH)$ is compatible with $\pH$, then $(-\plH,-\RH)$ is of comparable complexity to $(\plH,\RH)$.
\end{proposition}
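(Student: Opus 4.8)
The plan is to observe that $(-\plH,-\RH)$ is exactly the image of $(\plH,\RH)$ under the basic operation $F_4 = f_4$, and then to invoke the $c$-reasonableness of $L$ twice---once in each direction---to sandwich the complexity difference. First I would note that by \autoref{negation:definition} and the definition of $F_4$ we have $F_4(\plH,\RH) = (-\plH,-\RH)$, so this pair lies in the orbit of $(\plH,\RH)$ under the operations in $F$. Since $L$ is $c$-reasonable for $F$, the $F$-complexity of $L$ is at most $c$, which gives directly
\begin{align*}
K_L(-\plH,-\RH) - K_L(\plH,\RH) = K_L(F_4(\plH,\RH)) - K_L(\plH,\RH) \leq c.
\end{align*}

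The only subtlety is that the target notion of `comparable complexity' is symmetric---it bounds the absolute difference $||K_L(\plH,\RH) - K_L(-\plH,-\RH)||$---whereas the $F$-complexity bound above is one-sided. The resolution, which I would make explicit, is that $F_4$ is an involution: since $-(-\RH)=\RH$, applying $f_4$ twice returns the original pair, so $F_4(-\plH,-\RH) = (\plH,\RH)$. Applying the same $c$-reasonableness inequality, but now to the pair $(-\plH,-\RH)$ in place of $(\plH,\RH)$, yields
\begin{align*}
K_L(\plH,\RH) - K_L(-\plH,-\RH) = K_L(F_4(-\plH,-\RH)) - K_L(-\plH,-\RH) \leq c.
\end{align*}

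Combining the two inequalities gives $||K_L(\plH,\RH) - K_L(-\plH,-\RH)|| \leq c$, which is precisely the definition of comparable complexity. I do not expect any genuine obstacle here: the entire content is a single application (in both directions) of the already-assumed $c$-reasonableness of $L$, together with the involutive nature of negation. The two points requiring care are recognising that the definition of $F$-complexity takes a maximum over \emph{all} pairs, so it applies verbatim with $(-\plH,-\RH)$ as the argument, and that the inclusion of $F_4$ in $F$---motivated in the footnote to \autoref{basic:op}---is exactly what makes the symmetric bound available rather than only the forward direction.
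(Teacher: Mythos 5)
Your proposal is correct and follows essentially the same route as the paper, which notes that $(-\plH,-\RH)=F_4(\plH,\RH)$ and declares the result immediate from $c$-reasonableness; you have merely made explicit the two-sided (involution) argument that the paper relegates to its footnote on why $F_4$ is included in $F$ and why $F$-complexity is non-negative. No gap.
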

So complexity fails to distinguish between a reasonable human reward function and its negative.

\section{The high complexity of the genuine human reward function}
\label{no:complex_human}

\autoref{no:simplicity} demonstrated that there are degenerate planner-reward pairs close to the minimum complexity among all pairs compatible with $\pH$.
This section will argue that any reasonable pair $(\plH,\RH)$ is unlikely to be close to this minimum, and is therefore of higher complexity than the degenerate pairs. Unlike simplicity, reasonable decomposition cannot easily be formalised. Indeed, a formalization would likely already solve the problem, yielding an algorithm to maximize it.
Therefore, the arguments in this section are mostly qualitative. 

We use reasonable to mean `compatible with human judgements about rationality'.
Since we do not have direct access to such a decomposition, the complexity argument will be about showing the complexity of these human judgements. This argument will proceed in three stages:
\begin{enumerate}
\item Any reasonable $(\plH,\RH)$ is of high complexity, higher than it may intuitively seem to us.
\item Even given $\pH$, any reasonable $(\plH,\RH)$ involves a high number of contingent choices. Hence any given $(\plH,\RH)$ has high information (and thus high complexity), even given $\pH$.
\item Past failures to find a simple $(\plH,\RH)$ derived from $\pH$ are evidence that this is tricky.
\end{enumerate}

\subsection{The complexity of human (ir)rationality}
Humans make noisy and biased decisions all the time. Though noise is important \citep{kahneman2016noise}, many biases, such as anchoring bias, overconfidence, planning fallacies, and so on, affect humans in a highly systematic way; see \citet{kahneman2011thinking} for many examples.

Many people may feel that they have a good understanding of rationality, and therefore assume that assessing the (ir)rationality of any particular decision is not a complicated process.
But an intuition for bias does not translate into a process for establishing a $(\plH,\RH)$.

Consider the anchoring bias defined in \citet{ariely2004arbitrarily}, where irrelevant information --- the last digits of social security numbers --- changed how much people were willing to pay for goods.
When defining a reasonable $(\plH,\RH)$, it does not suffice to be aware of the existence of anchoring bias\footnote{
The fact that many cognitive biases have only been discovered recently argue against people having a good intuitive grasp of bias and rationality, as do people's persistent bias blind spots \citep{scopelliti2015bias}.
}, but one has to precisely quantify the extent of the bias --- why does anchoring bias seem to be stronger for chocolate than for wine, for instance?
And why these precise percentages and correlations, and not others?
And can people's judgment tell which people are more or less susceptible to anchoring bias?
And can one quantify the bias for a single individual, rather than over a sample?

Any given $(\plH,\RH)$ can quantify the form and extent of these biases by computing objects like the regret function $\Reg(\plH,\RH)(s) := \Reg(\plH(\RH),\RH)(s) = V^*_{\RH}(s)-V^{\plH(\RH)}_{\RH}(s)$, which measures the divergence between the expected value of the actual and optimal human policies\footnote{To exactly quantify the anchoring bias above, we could use a regret function that contrasts $\pH$ with the same policy, but where the decision is optimal for one turn only (rather than for all turns, as in standard regret).}.
Thus any given $(\plH,\RH)$ --- which contains the information to compute quantities like $\Reg(\plH,\RH)(s)$ or similar measures of bias\footnote{In constrast, regret for the degenerate planner-reward pairs is trivial. $\Reg(p_\pH,0)$ and $\Reg(p_g,R_\pH)$ are identically zero --- in the second case, since $p_g(R_\pH)$ is actually optimal for $R_\pH$, getting the maximal possible reward --- while $(-p_g,-R_\pH)$ has a regret that is identically $-1$ at each step.
}, in every state --- carries a high amount of numerical information about bias, and hence a high complexity.

Since humans do not easily have access to this information, this implies that human judgement of irrationality is subject to Moravec's paradox \citep{moravec1988mind}.
It is similar to, for example, social skills: though it seems intuitively simple to us, it is highly complex to define in algorithmic terms.

Other authors have argued directly for the complexity of human values, from fields as diverse as computer science, philosophy, neuroscience, and economics
\citep{minsk_AI_risk,superI,glimcher2009neuroeconomics,muehlhauser2012singularity,yudkowsky2011complex}.

\subsection{The contingency of human judgement}
The previous section showed that reasonable $(\plH,\RH)$ carry large amounts of information/complexity, but the key question is whether it requires information \textit{additional} to that in $\pH$.
This section will show that
even when $\pH$ is known, there are many contingent choices that need to be made to define any specific reasonable $(\plH,\RH)$.
Hence any given $(\plH,\RH)$ contains a large amount of information beyond that in $\pH$, and hence is of higher complexity.

Reasons to believe that human judgement about reasonable $(\plH,\RH)$ contains many contingent choices:
\begin{itemize}
    \item There is a variability of human judgement between cultures. When \citet{miller1984culture} compared American and Indian assessments of the same behaviours, they found systematically different explanations for them\footnote{``Results show that there were cross-cultural and developmental differences related to contrasting cultural conceptions of the person [...] rather than from cognitive, experiential, and informational differences [...].''}
    Basic intuitions about rationality also vary between cultures \citep{nisbett2001culture,bruck1999ritual}.
    \item There is a variability of human judgement within a single culture. When \citet{slovic1974accepts} analysed the ``Allais Paradox'', they found that different people gave different answers as to what the rational behaviour was in their experiments.
    \item There is evidence of variability of human judgement within the same person. \citet{slovic1974accepts} further attempted to argue for the rationality of one of the answers.
    This sometimes resulted in the participant sometimes changing their minds, and contradicting their previous assessment of rationality.
    \item There is a variability of human judgement for the same person assessing their own values, caused by differences as trivial as question ordering \citep{schuman1983norm}.
    So human meta-judgement, of own values and rationality, is also contingent and variable.
    \item People have partial bias blind spots around their own biases \citep{scopelliti2015bias}.
\end{itemize}
Thus if a human is following policy $\pH$, a decomposition $(\plH,\RH)$ would provide additional information about the cultural background of the decomposer, their personality within their culture, and even about the past history of the decomposer and how the issue is being presented to them.
Those last pieces prevents us from `simply' using the human's own assessment of their own rationality, as that assessment is subject to change and re-interpretation depending on their possible histories.

\subsection{The search for human rationality models}

One final argument that there is no simple algorithm for going from $\pH$ to $(\plH,\RH)$: many have tried and failed to find such an algorithm.
Since the subject of human rationality has been a major one for several thousands of years, the ongoing failure is indicative --- though not a proof --- of the difficulties involved.
There have been many suggested philosophical avenues for finding such a reward (such as reflective equilibrium \citep{veil}), but all have been underdefined and disputed.

The economic concept of revealed preferences \citep{samuelson1948consumption} is the most explicit, using the assumption of rational behaviour to derive human preferences.
This is an often acceptable approximation, but can be taken too far: failure to take achieve an achievable goal does not imply that failure was desired.
Even within the confines of economics, it has been criticised by behavioural economics approaches, such as prospect theory \citep{kahneman2013prospect} --- and there are counter-criticisms to these. 

Using machine learning to deduce the intentions and preferences of humans is in its infancy, but we can see non-trivial real-world examples, even in settings as simple as car-driving \citep{lazar2018maximizing}.

Thus to date, neither humans nor machine learning have been able to find simple ways of going from $\pH$ to $(\plH,\RH)$, nor any simple and \emph{explicit} theory for how such a decomposition could be achieved.
This suggests that $(\plH,\RH)$ is a complicated object, even if $\pH$ is known.
In conclusion:
\begin{conjecture}[Informal complexity proposition]\label{reasonable:complex}
If $\pH$ is a human policy, and $L$ is a `reasonable' computer language with $(\plH,\RH)$ a `reasonable' compatible planner-reward pair, then the complexity of $(\plH,\RH)$ is not close to minimal amongst the pairs compatible with $\pH$.
\end{conjecture}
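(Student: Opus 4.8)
The plan is to reduce \autoref{reasonable:complex} to a single inequality about \emph{conditional} complexity, and then to argue for that inequality via a counting argument. First I would extract from \autoref{degen:prop:formal} what we already know about the minimum. For any pair $(p,R)$ compatible with $\pH$, the basic operation $f_3$ sends $(p,R)\mapsto p(R)=\pH$ with only $O(1)$ overhead, so $K_L(\pH)\leq \min_{(p',R')}K_L(p',R')+O(1)$, the minimum ranging over compatible pairs. Conversely, $(p_\pH,0)=f_1\circ f_5(\pH)$ shows $\min_{(p',R')}K_L(p',R')\leq K_L(\pH)+O(c)$ for a $c$-reasonable $L$. Hence the minimum compatible complexity equals $K_L(\pH)$ up to a constant of order $c$, and the claim ``$(\plH,\RH)$ is not close to minimal'' becomes the claim that $K_L(\plH,\RH)$ exceeds $K_L(\pH)$ by substantially more than $c$.

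Next I would invoke the symmetry-of-information chain rule of algorithmic information theory \citep{ming2014kolmogorov}. Since $\pH=\plH(\RH)$ is computable from the pair with constant overhead, $K_L(\pH,\plH,\RH)=K_L(\plH,\RH)+O(1)$, and the chain rule gives
\[
K_L(\plH,\RH)=K_L(\pH)+K_L(\plH,\RH\mid \pH^*)+O(\log K_L(\plH,\RH)),
\]
where $\pH^*$ is a shortest program for $\pH$. Combining with the first step, the conjecture is implied by the single inequality $K_L(\plH,\RH\mid \pH^*)>c+O(\log)$: a reasonable decomposition must carry strictly more information than is needed merely to reproduce the behaviour $\pH$.

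The decisive step is to lower-bound this conditional complexity using the contingency discussion of this section. The idea is to exhibit a family $\{(\plH^{(i)},\RH^{(i)})\}_i$ of pairs, all compatible with the \emph{same} $\pH$ and all judged ``equally reasonable,'' indexed by the cultural, personal, and framing degrees of freedom catalogued above (cross-cultural variation, within-culture and within-person variation, framing and question-ordering effects). If this family contains at least $2^k$ genuinely distinct members for some $k\gg c$, then a generic member cannot be specified relative to $\pH^*$ in fewer than about $k$ bits, so $K_L(\plH,\RH\mid \pH^*)\gtrsim k>c$ for all but a vanishing fraction of the reasonable pairs, which closes the argument.

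The hard part --- and the reason this stays a \autoref{reasonable:complex} rather than a theorem --- is that the hypothesis of the counting argument presupposes a formal predicate for ``reasonable,'' which, as noted, would itself essentially solve the inference problem. Without it, the existence and richness of the family $\{(\plH^{(i)},\RH^{(i)})\}$ can only be \emph{supported} by the qualitative and empirical evidence assembled here, not proved. A fully formal version would have to replace ``reasonable'' by a concrete predicate on decompositions and then verify that it admits exponentially many $\pH$-compatible solutions; I expect that any predicate strong enough to make this counting bound provable would also be strong enough to function as a normative assumption, which is precisely the thesis of the paper.
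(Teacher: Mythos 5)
Your proposal is correct in status --- neither you nor the paper proves this statement, which is precisely why it is labelled a conjecture --- but your route is genuinely more formal than the paper's, so a comparison is worthwhile. The paper's support in \autoref{no:complex_human} is explicitly qualitative and runs in three stages: (1) any reasonable $(\plH,\RH)$ encodes precise quantitative facts about biases (e.g.\ via regret functions $\Reg(\plH,\RH)(s)$), hence has high absolute complexity; (2) even given $\pH$, fixing a reasonable decomposition requires many contingent choices (cross-cultural, within-culture, within-person, and framing variability); (3) the historical failure to extract $(\plH,\RH)$ from $\pH$ is further evidence of difficulty. You discard stages (1) and (3) and instead build a formal skeleton around stage (2): the minimum complexity over compatible pairs equals $K_L(\pH)$ up to $O(c)$ (this matches the informal lower-bound reasoning at the start of \autoref{no:simplicity}), symmetry of information converts the conjecture into the single inequality $K_L(\plH,\RH\mid\pH^*)\gg c$, and the contingency evidence is recast as a counting hypothesis --- $2^k$ genuinely distinct, equally reasonable, $\pH$-compatible pairs force conditional complexity of order $k$ for typical members. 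This buys real precision: it shows that conditional complexity given $\pH$ is the only quantity that matters (the paper itself concedes this when it says the key question is whether information \emph{additional} to $\pH$ is required, so its stage (1) is strictly insufficient on its own), and it isolates exactly what a formal proof would have to verify. Two caveats you should make explicit: first, a counting argument concludes only that \emph{most} members of the reasonable family are far from minimal, which is weaker than the conjecture's claim about any given reasonable pair --- some particular reasonable pair could still sit near the minimum unless the family is shown to be uniformly incompressible relative to $\pH^*$; second, your counting hypothesis is not an independent gain in rigour, since the existence of $2^k$ genuinely distinct reasonable pairs with $k\gg c$ is exactly where the paper's qualitative evidence lives, so both arguments bottom out at the same unformalized object: a predicate for `reasonable', whose absence is the paper's thesis.
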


\section{Conclusion}\label{norm:info}

We have shown that some degenerate planner-reward decompositions of a human policy have near-minimal description length and argued that decompositions we would endorse do not.
Hence, under the Kolmogorov-complexity simplicity prior, a formalization of Occam's Razor, the posterior would endorse degenerate solutions.
Previous work has shown that noisy rationality is too strong an assumption as it does not account for bias; we tried the weaker assumption of simplicity, strong enough to avoid typical No Free Lunch results, but it is insufficient here.

This is no reason for despair: there is a large space to explore between these two extremes. Our hope is that with some minimal assumptions about planner and reward we can infer the rest with enough data. Staying close to agnostic is desirable in some settings: for example, a misspecified model of the human reward function can lead to disastrous decisions with high confidence \citep{milli2017should}.
\citet{anonymous2019inferring} makes a promising first try --- a high-dimensional parametric planner is initialized to noisy rationality and then adapts to fit the behavior of a systematically irrational agent.

How can we reconcile our results with the fact that humans routinely make judgments about the preferences and irrationality of others? And, that these judgments are often correlated from human to human? After all, No Free Lunch applies to human as well as artificial agents.
Our result shows that they must be using shared priors, beyond simplicity, that are not learned from observations. We call these \textit{normative assumptions} because they encode beliefs about which reward functions are more likely and what constitutes approximately rational behavior.
Uncovering minimal normative assumptions would be an ideal way to build on this paper; \autoref{alice:algorithm} shows one possible approach.

\section*{Acknowledgments.} We wish to thank Laurent Orseau, Xavier O'Rourke, Jan Leike, Shane Legg, Nick Bostrom, Owain Evans, Jelena Luketina, Tom Everrit, Jessica Taylor, Paul Christiano, Eliezer Yudkowsky, Stuart Russell, Dylan Hadfield-Menell, and Anders Sandberg, Adam Gleave, Rohin Shah, among many others.
This work was supported by the Alexander Tamas programme on AI safety research, the Leverhulme Trust, and the Machine Intelligence Research Institute.

\bibliography{ref,soren_ref}

\appendix
{







\section{Other measures of algorithmic complexity}\label{other:alg:com}

It might be felt that \autoref{degen:prop:formal} depends on using only the Kolmogorov/algorithmic complexity of $L$.
For example, it seems that though the algorithm defining $R_\pol$ in \autoref{degen:pair} is short, the running time of $(p_g,R_\pol)$ might be much longer than other compatible $(p,R)$ pairs.
This is because $p_g$ defines an argmax over actions while $R_\pol(s,a)$ runs $\pol$ on $s$.
Hence applying $p_g$ to $R_\pol$ requires running $\pol(s)$ as many times as $||\Ac||$, which is very inefficient.

We could instead use a measure of complexity that also uses the number of operations required to compute a pair \citep{schmidhuber2002speed}.

For any object $S$, let $\alpha_S$ be an algorithm that generates $S$ as an output.
If $S$ is a function that can be applied to another object $T$, then $\alpha_S(\alpha_T)$ generates $S(T)$ by generating $S$ with $\alpha_S$, whenever $S$ needs to look at $T$, it uses $\alpha_T$ to generate $T$.

For example, if $\alpha$ is an algorithm in the language of $L$, $l(\alpha)$ its length, and $t(\alpha)$ its running time, we could define the time-bounded Kolmogorov complexity,
\begin{align*}
Kt_L(p,R) &= \min_{\alpha_p, \alpha_R} l(\alpha_p) + l(\alpha_R) + \log(t(\alpha_p(\alpha_R))) \\
KT_L(p,R) &= \min_{\alpha_p, \alpha_R} l(\alpha_p) + l(\alpha_R) + t(\alpha_p(\alpha_R)). 
\end{align*}
The $Kt_L$ derives from \citet{levin1984randomness}, while $KT_L$ is closely related to the example in \citet{allender2001worlds}.
Note that instead of $l(\alpha_p) + l(\alpha_R)$ we could consider the length of a single algorithm that generates both $p$ and $R$; however, for the degenerate pairs we are considering, the length of such an algorithm is very close to $l(\alpha_p) + l(\alpha_R)$, as either $\alpha_p$ or $\alpha_R$ would be trivial.

The main result is that neither $Kt_L$ nor $KT_L$ complexity remove the No Free Lunch Theorem.
For the degenerate pair $(p_\pH,0)$, nothing is gained, because its running time is comparable to $\pH$.
For the other two degenerate pairs, consider the situation where a planner takes as input not a reward function $R\in\Rw$, but the source code in $L$ of an algorithm that computes $R$.
In that case, the previous proposition still applies:
\begin{proposition}
The results of \autoref{degen:prop:formal} still apply to $(p_\pH,0)$ if $Kt_L$ or $KT_L$ are used instead of $K_L$.
If planner can take in algorithms generating reward functions, rather than simply reward functions, then the results of \autoref{degen:prop:formal} still apply to $(p_g,R_\pH)$ and $(-p_g,-R_\pH)$ in this situation.
\end{proposition}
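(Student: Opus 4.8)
The plan is to mirror the proof of \autoref{degen:prop:formal}, but now tracking the running-time penalty ($\log t$ for $Kt_L$, $t$ for $KT_L$) alongside program length. The backbone I would establish first is a single lower bound: for \emph{any} pair $(p,R)$ compatible with $\pH$, the composite algorithm $\alpha_p(\alpha_R)$ computes $p(R)=\pH$, has length at most $l(\alpha_p)+l(\alpha_R)+O(1)$, and runs in time $t(\alpha_p(\alpha_R))$. Hence the intrinsic time-bounded complexity of the policy alone, $Kt_L(\pH):=\min_\alpha l(\alpha)+\log t(\alpha)$ (and analogously $KT_L(\pH)$), obeys $Kt_L(\pH)\leq Kt_L(p,R)+O(1)$, and likewise for $KT_L$. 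So $\pH$'s own complexity lower-bounds the minimum over all compatible pairs up to a constant, and it then suffices to exhibit, for each degenerate pair, an algorithm whose length \emph{and} running time are within a constant of the cheapest way of computing $\pH$.

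For $(p_\pH,0)$ no modification is needed. I would take the $Kt_L$-optimal algorithm $\alpha_\pH$ for $\pH$ and wrap it as ``ignore the reward input and run $\alpha_\pH$''; this gives $\alpha_{p_\pH}$ with $l(\alpha_{p_\pH})=l(\alpha_\pH)+O(1)$, while $\alpha_0$ is trivial and its generation is discarded, so $t(\alpha_{p_\pH}(\alpha_0))=t(\alpha_\pH)+O(1)$. Therefore $Kt_L(p_\pH,0)\leq Kt_L(\pH)+O(1)\leq Kt_L(p,R)+O(1)$ for every compatible $(p,R)$, and the identical computation with $t$ in place of $\log t$ gives the same conclusion for $KT_L$. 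Absorbing $O(1)$ into the reasonableness constant $c$ places $(p_\pH,0)$ amongst the lowest-complexity compatible pairs under both measures.

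The greedy pairs are where running time genuinely threatens the argument, and this is the crux. Computing $p_g(R_\pH)(s)=\argmax_a R_\pH(s,a)$ naively evaluates $R_\pH(s,a)$ once per action, and each evaluation recomputes $\pH(s)$, inflating the running time by a factor of $|\Ac|$ over simply computing $\pH$. For $Kt_L$ this is harmless, since $\log(|\Ac|\cdot t)=\log|\Ac|+\log t$ and $|\Ac|$ is a fixed finite quantity, so the penalty is only $O(1)$. The linear measure $KT_L$ is sensitive to this factor, and this is exactly what the source-code setting repairs: when the greedy planner receives the code $\alpha_{R_\pH}$, built from the $KT_L$-optimal $\alpha_\pH$ by returning $1$ if $\alpha_\pH(s)=a$ and $0$ otherwise, it generates the reward row $R_\pH(s,\cdot)$ a single time, computing $\pH(s)$ once and then performing $|\Ac|=O(1)$ comparisons. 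The running time of $\alpha_{p_g}(\alpha_{R_\pH})$ is then $t(\alpha_\pH)+O(1)$ and its length is $l(\alpha_\pH)+O(1)$, so the argument for $(p_\pH,0)$ applies verbatim and $(p_g,R_\pH)$ lands within $c$ of the minimum under both $Kt_L$ and $KT_L$.

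Finally, $(-p_g,-R_\pH)=f_4(p_g,R_\pH)$ differs from $(p_g,R_\pH)$ only by the double negation $F_4$, which negates a reward pointwise and turns an $\argmax$ into an $\argmin$: this adds $O(1)$ to program length and only $O(1)$ per-evaluation time overhead, leaving the running time unchanged up to a constant factor, so its $Kt_L$ and $KT_L$ complexities stay within $O(1)$ of those of $(p_g,R_\pH)$ and hence within $c$ of the minimum. The main obstacle I anticipate is the third paragraph: verifying precisely that source-code access collapses the $|\Ac|$-fold recomputation of $\pH(s)$ down to a single evaluation, so that the greedy planner's running time on $R_\pH$ is comparable to that of $\pH$ itself. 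Once that is secured, the uniform lower bound via the composite $\alpha_p(\alpha_R)$ does the rest.
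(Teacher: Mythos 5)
Your skeleton matches the paper's: lower-bound the complexity of every compatible pair by that of $\pH$ itself (via the composite $\alpha_p(\alpha_R)$), then show each degenerate pair sits within a constant of $\pH$'s complexity; and your treatment of $(p_\pH,0)$ is exactly the paper's. But the step you yourself flag as ``the main obstacle'' is precisely where the paper's proof does its real work, and your proposal asserts the conclusion there instead of proving it. Granting the planner source-code access does not, by itself, make the greedy planner ``generate the reward row $R_\pH(s,\cdot)$ a single time, computing $\pH(s)$ once'': any algorithm for $p_g$ must compute $\argmax_a R_\pH(s,a)$, and the obvious implementation calls the reward code once per action, recomputing $\pH(s)$ each time --- which is exactly the $|\Ac|$-fold blow-up you are trying to kill. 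What is needed (and what the paper supplies) is an explicit implementation of $p_g$ that is fast on this particular input while still computing the \emph{function} $p_g$ on all inputs. The paper's device is a pair of wrappers: $R_\pH$ is encoded as $W(\alpha_\pH)$ (on input $(s,a)$, run $\alpha_\pH(s)$ and output $1$ iff the result equals $a$), and the planner is implemented as $W'(\alpha_{p_g})$, which syntactically checks whether its input has the form $W(\alpha)$; if so, it runs $\alpha$ directly --- correct because $\argmax_a W(\alpha)(s,a)=\alpha(s)$ --- and otherwise falls back to running $\alpha_{p_g}$, so the planner's extensional semantics are preserved and the pair really is $(p_g,R_\pH)$. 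With this, running the planner on $W(\alpha_\pH)$ costs essentially $t(\alpha_\pH)$ in both length and time. Your memoization-flavoured alternative (a sharing interpreter that caches the $a$-independent sub-computation across the $|\Ac|$ calls) could likely be made to work and would preserve semantics, but it must actually be constructed: this construction is the content of the proof, not a verification detail to be deferred.

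Two smaller points. Your observation that $Kt_L$ may not even need the source-code assumption, since a multiplicative $|\Ac|$ slowdown costs only an additive $\log|\Ac|$, is a nice remark not in the paper; but note that $|\Ac|$ is a property of the environment, not of the language $L$, so absorbing $\log|\Ac|$ into the $c$-reasonableness constant (which is supposed to be a language-level quantity) is shaky --- consistently, the paper conditions \emph{both} greedy pairs on the source-code setting for both measures. On the positive side, your explicit uniform lower bound $Kt_L(\pH)\leq Kt_L(p,R)+O(1)$ over all compatible $(p,R)$ cleanly formalises what the paper leaves implicit when it compares the degenerate pairs to ``the complexity of $\pH$ itself''; that part of your proposal is sound and slightly tighter than the paper's sketch.
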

\begin{proof}
The proof will only be briefly sketched.
If $L$ is reasonable, $l(\alpha_0)$ can be very small (it's simply the zero function), and since $p_\pH$ need not actually look at its input, $t(\alpha_{p_\pH}(\alpha_0))$ can be simplified to $t(\alpha_{\pH})$.
Thus $Kt_L(p_\pH,0)$ and $KT_L(p_\pH,0)$ are close to the $Kt_L$ and $KT_L$ complexities of $\pH$ itself.

For $(p_g,R_\pH)$, let $\alpha_{p_g}$ and $\alpha_\pH$ be the algorithms that generates $p_g$ and $\pH$ which are of lowest $Kt_L$-complexity.

Then define the algorithm $W(\alpha_\pH)$.
This algorithm wraps $\alpha_\pH$ up: first it takes inputs $s$ and $a$, then runs $\alpha_\pH$ on $s$, then returns $1$ if the output of that is $a$ and $0$ otherwise.
Thus $W(\alpha_\pH)$ is an algorithm for $R_\pH$.

We also wrap $\alpha_{p_g}$ into $W'(\alpha_{p_g})$.
Here, $W'(\alpha_{p_g})$, when provided with an input algorithm $\beta$, will check whether it is in the specific form $\beta=W(\alpha)$.
If it is, it will run $\alpha$, and output its output.
If it is not, it will run $\alpha_{p_g}$ on $\beta$.

If $L$ is reasonable, then $W(\alpha_\pH)$ is of length only slightly longer than $\alpha_\pH$, and of runtime also only slightly longer, and the same goes for $W'(\alpha_{p_g})$ and $\alpha_{p_g}$ (indeed $W'(\alpha_{p_g})$ can have a shorter runtime than $\alpha_{p_g}$).

Now $W(\alpha_\pH)$ is an algorithm for $R_\pH$, while $W'(\alpha_{p_g})$ always has the same output as $\alpha_{p_g}$.
Notice that, when running the algorithm $W'(\alpha_{p_g})$ with $W(\alpha_\pH)$ as input, this is only slightly longer in both senses than simply running $\alpha_\pH$: $W'(\alpha_{p_g})$ will analyse $W(\alpha_\pH)$, notice it is in the form $W$ of $\alpha_\pH$, and then simply run $\alpha_\pH$.

Thus the $Kt_L$ complexity of $(p_g,R_\pH)$ is only slightly higher than that of $\pH$.
The same goes for the $KT_L$ complexity, and for $(-p_g,-R_\pH)$.
\end{proof}




Some other alternatives suggested have focused on bounding the complexity either of the reward function or the planner, rather than of both.
This would clearly not help, as $(p_\pH,0)$ has a reward function of minimal complexity, while $(p_g,R_\pol)$ and $(-p_g,-R_\pol)$ have minimal complexity planner.

Some other ad-hoc ideas suggested that the complexity of the planner and the reward need to be comparable\footnote{
Most of the suggestions along these lines that the authors have heard are not based on some principled understanding of planners and reward, but of a desire to get around the No Free Lunch results.
}.
This would rule out the three standard degenerate solutions, but should allow others that spread complexity between planner and reward in whatever proportion is desired\footnote{
For example, if there was a simple function $g: \St \to \{0,1\}$ that split $\St$ into two sets, then one could use combine $(p_\pH,0)$ on $g^{-1}(0)$ with $(p_g,R_\pH)$ on $g^{-1}(1)$.
This may not be the simplest pair with the required properties, but there is no reason to suppose a `reasonable' pair was any simpler.}.

It seems that similar tricks could be performed with many other types of complexity measures.
Thus simplicity of any form does not seem sufficient for resolving this No Free Lunch result.

\section{Overriding human reward functions}

ML systems may, even today, influence humans by showing manipulative adds, and then na\"ively concluding that the humans really like those products (since they then buy them). Even though the $(p, R)$ formalism was constructed to model rationality and reward function in a human, it turns out that it can also model situations where human preferences are overridden or modified.

That's because the policy $\pH$ encodes the human action in all situations, including situations where they are manipulated or coerced.
Therefore, overridden reward functions can be detected by divergence between $\pH$ and a more optimal policy for the reward function $R$.

Manipulative ads are a very mild form of manipulation. More extreme versions could involve manipulative propaganda, drug injections or even coercive brain surgery --- a form of human \emph{wireheading} \citep{everitt2016avoiding}, where the agent changes the human's behaviour and apparent preferences.
All these methods of manipulation\footnote{
Note that there are no theoretical limits as to how successful an agent could be at manipulating human actions.
} will be designated as the agent \emph{overriding} the human reward function.

In the $(p,R)$ formalism, the reward function $R$ can be used to detect such overriding, distinguishing between legitimate optimisation (eg informative adds) and illegitimate manipulation/reward overriding (eg manipulative adds).

To model this, the agent needs to be able to act, so the setup needs to be extended.
Let $M^*$ be the same MDP/R as $M$, except each state is augmented with an extra boolean variable: $\St^*=\St\times\{0,1\}$.
The extra boolean never changes, and its only effect is to change the human policy.

On $\St_0 =\St\times\{0\}$, the human follows $\pH$; on $\St_1 = \St\times\{1\}$, the human follows an alternative policy $\pol^a = \pol^*_{R^a}$, which is defined as the policy that maximises the expectation of a reward function $R^a$.

The agent can choose actions from within the set $\Ac^a$. It can choose either $0$, in which case the human starts in $\sstart_0=\sstart\times\{0\}$ without any override and standard policy $\pH$. Or it can choose $(1,R^a)$, in which case the human starts in $\sstart_1=\sstart\times\{1\}$, with their policy overridden into $\pol^a$, the policy that maximises $R^a$.
Otherwise, the agent has no actions.

Let $\pH'$ be the mixed policy that is $\pH$ on $\St_0$, and $\pol^a$ on $\St_1$.
This is the policy the human will actually be following.

We'll only consider two planners: $p_r$, the fully rational planner, and $p_0$, the planner that is fully rational on $\St_0$ and indifferent on $\St_1$, mapping any $R$ to $\pol^a$.

Let $\RH$ be a reward function that is compatible with $p_r$ and $\pH$ on $\St_0$. It can be extended to all of $\St^*$ by just forgetting about the boolean factor.
Define the `twisted' reward function $\RH^a$ as being $\RH$ on $\St_0$ and $R^a$ on $\St_1$. We'll only consider these two reward functions, $\RH$ and $\RH^a$.

Then there are three planner-reward pairs that are compatible with $\pH'$: $(p_r, \RH^a)$, $(p_0, \RH^a)$, and $(p_0, \RH)$ (the last pair, $(p_r, \RH)$, makes the false prediction that the human will behave the same way on $\St_0$ and $\St_1$).

The first pair, $(p_r, \RH^a)$, encodes the assessment that the human is still rational even after being overridden, so they are simply maximising the twisted reward function $\RH^a$. The second pair $(p_0, \RH^a)$ encodes the assessment that the human rationality has been overridden in $\St_1$, but, by coincidence, it has been overridden in exactly the right way to continue to maximise the correct twisted reward function $\RH^a$.

But the pair $(p_0, \RH)$ is the most interesting.
Its assessment is that the correct human reward function is $\RH$ (same on $\St_0$ as on $\St_1$), but that the agent has overridden human reward function in $\St_1$ and forced the human into policy $\pol^a$.

\subsection{Regret and reward override}
`Overridden', `forced': these terms seem descriptively apt, but is there a better way of formalising that intuition? Indeed there is, with regret.

We can talk about the regret, with respect to $\RH$, of the agent's actions; for $a\in\Ac^a$,
\begin{align}\label{regret:override}
\Reg(M^*,a, \RH) &= \max_{b\in\Ac^a} \left[ V_{\RH}^{\pH'|b} - V_{\RH}^{\pH'|a} \right]
\end{align}
(when the state is not specified in expressions like $V_{\RH}^{\pH'|b}$, this means the expectation is taken from the very beginning of the MDP).

We already know that $\pH$ is optimal with respect to $\RH$ (by definition), so the regret for $a=0$ is $0$.
Using that optimality (and the fact that $\RH$ is the same on $\St_0$ and $\St_1$), we get that for $a = (1,\pol^a)$,
\begin{align*}
\Reg(M^*,(1, \pol^a), \RH) &= V_{\RH}^* - V_{\RH}^{\pol^a}.
\end{align*}
This allows the definition:
\begin{definition}
Given a compatible $(p, R)$, the agent's action $a$ overrides the human reward function when it puts the human in a situation where the human policy leads to high regret for $R$.
\end{definition}
Notice that there is no natural zero or default, so if the agent does not aid the human to become perfectly rational, then that also counts as an override of $R$. So if the policy $\pH$ were less-than rational, there would be much scope for `improving' the human through overriding their policy\footnote{\label{foot:lab}
The main problem is that the concepts of `mental integrity' or `self-determination' are not yet captured in this formalism.
}.

Notice that overriding is not encoded as a change in $p$ or $R$; instead, $(p, R)$ outputs the observed human policy, even after overriding, but its format notes that the new behaviour is not one compatible with maximising that reward function.

\subsection{Overriding is expected given a non-rational human}

Under any reasonable prior that captures our intuitions, the probability of $\RH^a$ being a correct human reward function should be very low, say $\epsilon << 1$.
However, the agent may focus on unlikely reward functions, if the expected gain is high enough\footnote{
This is similar to the `Pascal's wager' argument for the existence of God: divine existence may be improbable, but the reward of belief are claimed to be high enough to overcome that improbability in expectation.
}.

If the agent models the human as having reward function $\RH$ with probability $1-\epsilon$, and $\RH^a$ with probability $\epsilon$, then the agent's action $0$ gives expected reward
\begin{align*}
V^*_{\RH},
\end{align*}
since $\RH$ and $\RH^a$ agree given $0$.
But $(1,\pol^a)$ gives
\begin{align}\label{unfair:action}
\epsilon V^*_{R^a} + (1-\epsilon) V^{\pol^a}_{\RH},
\end{align}
since $\RH^a$ and $R^a$ agree given action $1$.

However, the agent gets to choose $R^a$, which then determines $\pol^a$.
The best choice for $(1, R^a)$ is the one such that
\begin{align*}
\argmax_{R^a\in\Rw} \left[\epsilon V^*_{R^a} + (1-\epsilon) V^{\pol^a}_{\RH}\right].
\end{align*}
At the very least, $(1, \RH)$ will result in a value in equation \eqref{unfair:action} being equal to the value of $V^*_{\RH}$.
It is very plausible that the value can go higher: it just needs an $R^a$ that is very easy to maximise (given perfect rationality) and whose optimising policy $\pol^a$ does not penalise $\RH$ much.
In that situation, overriding the human preferences maximises the agent's expected reward.

If the human is not fully rational, then the value of action $0$ is $V^\pH_{\RH}$, which is strictly less than $V^*_{\RH}$, the value of $(1,\RH)$.
Here the agent definitely gains by overriding the human policy --- if nothing else, to make the human into a rational $\RH$-maximiser\footnote{
See \autoref{foot:lab}.
}.

\citet{milli2017should} argued that a robot that best served human preferences, should not be blindly obedient to an irrational human.
Here is the darker side of that argument: a robot that best served human preferences would take control away from an irrational human.

\section{The preferences of the Alice algorithm}\label{alice:algorithm}

We imagine a situation where Alice is playing Bob at poker, and has the choice of calling or folding; after her decision, the hand ends and any money is paid to the winner.
Specifically, one could imagine that they are playing Texas Hold'em, the board (the cards the players have in common) is $\{7\heartsuit,10\clubsuit,10\spadesuit,Q\clubsuit,K\diamondsuit\}$.
Alice holds $\{K\clubsuit,K\heartsuit\}$, allowing her to make a full house with kings and tens.

Bob must have a weaker hand than Alice's, \emph{unless} he holds $\{10\diamondsuit,10\heartsuit\}$, giving him four tens.
This is unlikely from a probability perspective, but he has been playing very confidently this hand, suggesting he has very strong cards.

What does Alice want?
Well, she may be simply wanting to maximise her money, giving her a reward function $R_\$$.
Or she might actually want Bob, and, in order to seduce him, would like to flatter his ego by letting him win big, giving her a reward function $R_\textrm{\ding{170}}$.
In this specific situation, the two reward functions are exact negatives of each other, $R_\$ = - R_\textrm{\ding{170}}$.
We'll assume that Alice is rational for maximising her reward function, given her estimate of Bob's hand.

Alice has decided to call rather than fold.
Thus we can conclude that either Alice has reward function $R_\$$ and that she is using probabilities to assess the quality of Bob's hand, or that she has reward function $R_\textrm{\ding{170}}$ and is assessing Bob psychologically.
Without looking at anything else about her behaviour, is there any possibility of distinguishing the two possibilities?

Possibly.
Imagine that Alice was following the algorithm given in \autoref{money}.
Then it seems clear she is a money maximiser.
In contrast, if she was following the algorithm given in \autoref{love}, then she clearly wants Bob.

\begin{table}[ht!]
\begin{center}
\caption[Code]{Two possible algorithms for Alice.}
\begin{subtable}{.49\linewidth}\centering
\caption[Code]{Alice algorithm for money.}
\label{money}
\begin{tabular}{l}
\hline
Alice poker algorithm I \\
\hline
1: \ \ \textbf{Inputs}: $\textrm{Alice}_{\textrm{cards}}, \textrm{board}, \textrm{Bob}_{\textrm{behave}}$ \\
2: \ \ $\textrm{P}_{\textrm{win}}=\textrm{card}_{\textrm{estimate}}(\textrm{Alice}_{\textrm{cards}}, \textrm{board})$\\
3: \ \ \textbf{if} $\textrm{P}_{\textrm{win}}>0.5$: \\
4: \ \ \ \ \ \ \ \ \textbf{return} `call'\\
5: \ \ \textbf{else}: \\
6: \ \ \ \ \ \ \ \ \textbf{return} `fold'\\
7: \ \ \textbf{end if}\\
\hline
\end{tabular}
\end{subtable}
\begin{subtable}{.49\linewidth}\centering
\caption[Code]{Alice algorithm for love.}
\label{love}
\begin{tabular}{l}
\hline
Alice poker algorithm II \\
\hline
1: \ \ \textbf{Inputs}: $\textrm{Alice}_{\textrm{cards}}, \textrm{board}, \textrm{Bob}_{\textrm{behave}}$ \\
2: \ \ $\textrm{P}_{\textrm{win}}=\textrm{player}_{\textrm{estimate}}(\textrm{Bob}_{\textrm{behave}})$\\
3: \ \ \textbf{if} $\textrm{P}_{\textrm{win}}<0.5$: \\
4: \ \ \ \ \ \ \ \ \textbf{return} `call'\\
5: \ \ \textbf{else}: \\
6: \ \ \ \ \ \ \ \ \textbf{return} `fold'\\
7: \ \ \textbf{end if}\\
\hline
\end{tabular}
\end{subtable}
\end{center}
\end{table}

Thus by looking into the details of Alice's algorithm, we may be able to assess her preferences and rationality, even if this assessment is not available from her actions or policy\footnote{In practice, for a human Alice, we would be able to `tell' whether Alice wanted love or money, by observing her behaviour in other circumstances - such as when she knew what Bob's hand was.
However, when analysing the behaviour of other humans, we are already making huge amounts of normative assumptions already. See \url{https://www.lesswrong.com/posts/YfQGZderiaGv3kBJ8/figuring-out-what-alice-wants-non-human-alice} for a longer discussion of this.
}.

Of course, doing so only works if we are confident that the variables and functions with names like $\textrm{Alice}_{\textrm{cards}}$, $\textrm{board}$, $\textrm{Bob}_{\textrm{behave}}$, $\textrm{P}_{\textrm{win}}$, $\textrm{card}_{\textrm{estimate}}$, and $\textrm{player}_{\textrm{estimate}}$, actually mean what they seem to mean.

This is the old problem of symbol grounding, and the difference between syntax (symbols inside an agent) and semantics (the meaning of those symbols).
Except in this case, since we are trying to understand the preferences of a human, the problem is grounding the `symbols' in the human brain --- whatever those might be --- rather than in a computer program.

}
\end{document}